
\documentclass[10pt,twocolumn,letterpaper]{article}

\usepackage[pagenumbers]{iccv}      

%
%
\definecolor{ForestGreen}{cmyk}{0.864, 0.0, 0.429, 0.396}


\usepackage{amsmath,amsfonts,bm}

\usepackage{amsthm}
\theoremstyle{definition}

\newtheorem{proposition}{Proposition}
\newtheorem{corollary}{Corollary}



\def\Figref#1{Figure~\ref{#1}}


\def\Secref#1{Section~\ref{#1}}


\def\eqref#1{equation~\ref{#1}}






\def\Algref#1{Algorithm~\ref{#1}}



\def\1{\bm{1}}










\DeclareMathAlphabet{\mathsfit}{\encodingdefault}{\sfdefault}{m}{sl}
\SetMathAlphabet{\mathsfit}{bold}{\encodingdefault}{\sfdefault}{bx}{n}

\def\gA{{\mathcal{A}}}

\def\gL{{\mathcal{L}}}

\def\gS{{\mathcal{S}}}

\def\gV{{\mathcal{V}}}










\newcommand{\R}{\mathbb{R}}



\DeclareMathOperator*{\argmin}{arg\,min}

\usepackage{multirow}

\usepackage[ruled]{algorithm2e}
\usepackage{algpseudocode}
\usepackage{adjustbox}

\newcommand\shortsection[1]{\vspace{4pt}{\noindent\bf #1}.}

\usepackage{comment}

%
\definecolor{iccvblue}{rgb}{0.21,0.49,0.74}
\usepackage[pagebackref,breaklinks,colorlinks,allcolors=iccvblue]{hyperref}

\title{Generalizable Targeted Data Poisoning against Varying Physical Objects}

\author{Zhizhen Chen$^1$\quad Zhengyu Zhao$^1$\quad Subrat Kishore Dutta$^{2}$\\ Chenhao Lin$^1$\quad Chao Shen$^1$\quad Xiao Zhang$^2$\\
$^1$Xi'an Jiaotong University \quad 
$^2$CISPA Helmholtz Center for Information Security\\
{\tt\small zhizhenc@stu.xjtu.edu.cn\quad \{subrat.dutta, xiao.zhang\}@cispa.de} \\
{\tt\small \{zhengyu.zhao, linchenhao\}@xjtu.edu.cn\quad chaoshen@mail.xjtu.edu.cn}
}

\begin{document}

\maketitle

\begin{abstract}
Targeted data poisoning (TDP) aims to compromise the model's prediction on a specific (test) target by perturbing a small subset of training data. 
Existing work on TDP has focused on an overly ideal threat model in which the same image sample of the target is used during both poisoning and inference stages.
However, in the real world, a target object often appears in complex variations due to changes of physical settings such as viewpoint, background, and lighting conditions.
In this work, we take the first step toward understanding the real-world threats of TDP by studying its generalizability across varying physical conditions.
In particular, we observe that solely optimizing gradient directions, as adopted by the best previous TDP method, achieves limited generalization.
To address this limitation, we propose optimizing both the gradient direction and magnitude for more generalizable gradient matching, thereby leading to higher poisoning success rates.
For instance, our method outperforms the state of the art by $19.49\%$  when poisoning CIFAR-10 images targeting multi-view cars.
\end{abstract}

\section{Introduction}
Training modern machine learning models requires a large amount of data, which is often crawled from the internet \citep{schuhmann2022laion, gadre2024datacomp, sharma2018conceptual, changpinyo2021conceptual}.
However, the data published by untrusted third parties may be collected as part of the training set \citep{carlini2024poisoning}, amplifying the threats of data poisoning~\citep{biggio2012poisoning, xiao2012adversarial, 8685687, yao2019latent, koh2017understanding, munoz2017towards}.
Among different types of data poisoning~\citep{cina2023wild}, \emph{targeted data poisoning} (TDP) aims to compromise the model's behavior solely on specific targets without affecting the model's overall accuracy.
In particular, clean-label TDP, which modifies training samples by injecting small, imperceptible pixel perturbations~\cite{shafahi2018poison, huang2020metapoison, aghakhani2021bullseye, zhu2019transferable, geiping2021witches}, is even more stealthy.

Although clean-label TDP has been extensively studied in the existing literature~\citep{huang2020metapoison, aghakhani2021bullseye, geiping2021witches}, poisoning against a physical object is still an open problem~\citep{goldblum2022dataset, cina2023wild}.
These works typically focus on an ideal threat model where the poisoner uses a single image sample of the target in the poisoning stage, and the \textit{same} sample is evaluated during inference.
However, due to background, lighting, and viewpoint changes, a physical object often appears in its complex variants in the real world.
Therefore, using a single image sample is insufficient to capture the real-world threats of TDP.

\begin{figure}[!t]
\centering
\includegraphics[width=0.95\linewidth]{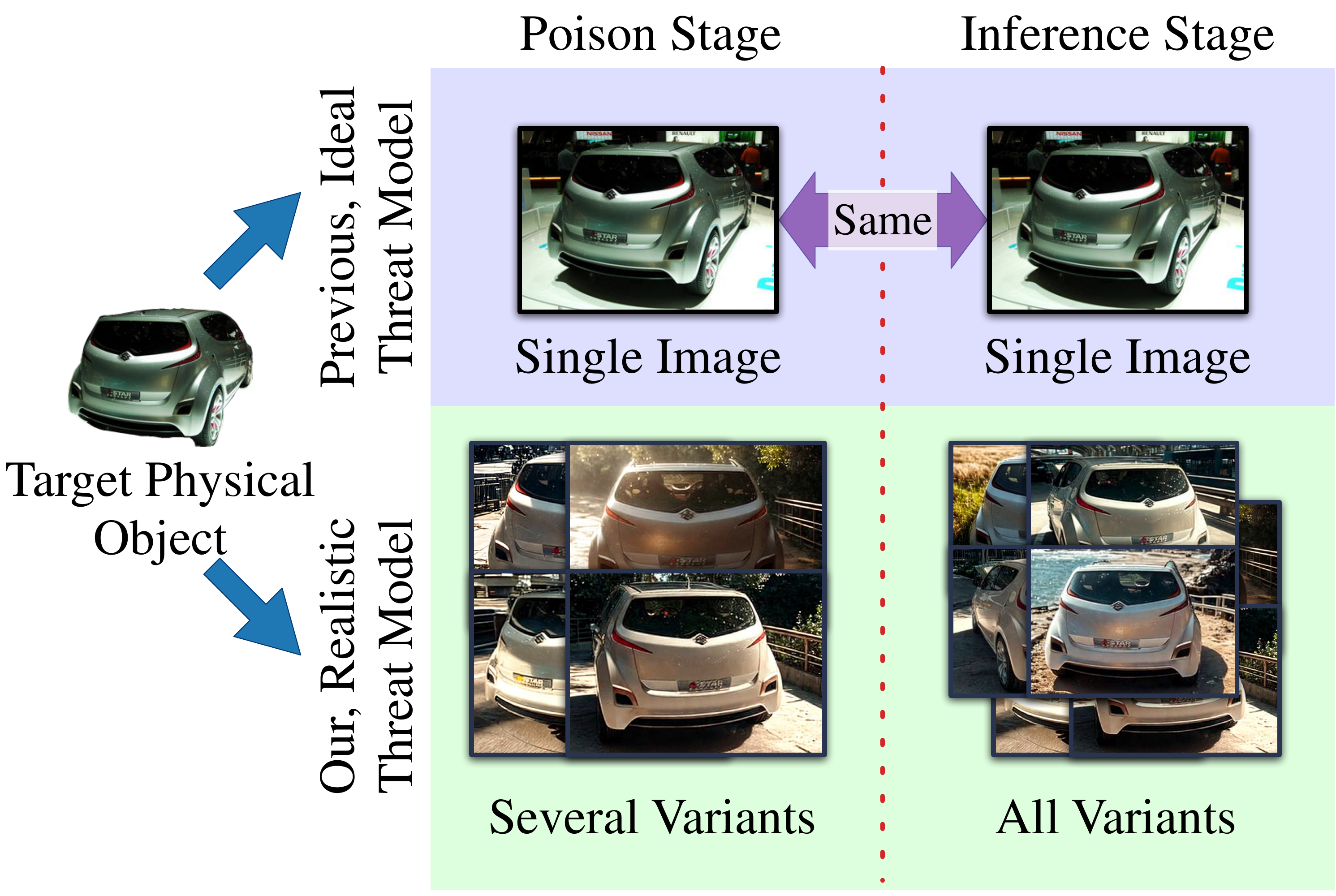}
\caption{
Compared to the previous, overly ideal threat model, ours considers poisoning generalization across realistic target variations regarding viewpoint, background, lighting conditions, etc.
}
\label{fig:threat}
\end{figure}

In this paper, we address the above limitation by conducting the first systematic generalizability study of clean-label target data poisoning considering diverse variations of physical objects.
\Figref{fig:threat} illustrates the key difference between our realistic threat model and the previous one. 
In particular, our threat model is more challenging in two aspects: (i) the poisoner needs to accurately control the poison over a set of images instead of a single image, and (ii) the poisoner is only allowed to utilize a limited number of target object variations during the attack but hoping to generalize to all possible (seen and unseen) variants (Section \ref{sec:threat model}).

Under such a more realistic threat model, we study the state-of-the-art TDP methods that rely on gradient matching between poisoned and target images~\cite{geiping2021witches, souri2022sleeper, yang2022not}.
Our analyses show that solely optimizing the direction (cosine similarity) is insufficient to match the gradients when considering varying physical settings of the target object (Section \ref{sec:motivation}).
To address this limitation, we propose optimizing both the gradient direction and magnitude (Euclidean distance) under the gradient matching framework (Section \ref{sec:detailed design}). Such a novel design enables us to achieve higher poisoning success for \emph{generalizable targeted data poisoning} (G-TDP).

Through comprehensive experiments across multiple image benchmarks, various model architectures, and a range of training paradigms, we demonstrate the superiority of the proposed method against target variations that commonly appear in the physical world, including changes in viewpoint, background, and lighting conditions (Section \ref{sec:exp}).
Specifically for viewpoint, we choose Multi-View Cars \citep{ozuysal2009pose} as the target, where our method outperforms the previous best method~\citep{geiping2021witches} by $19.49\%$ in poisoning success rate while retaining similar validation accuracy, averaged over four models on CIFAR-10.
In addition, we construct a dataset with controllable background and lighting variations, where our method maintains a high success rate of $54.92\%$. In sharp contrast, the rate is $0\%$ without poisoning.
Under the standard TDP settings, our method also attains the best poisoning performance compared with baselines, again confirming the advantages of our attack design (Section \ref{sec:benchmark}).

Below, we summarize our work's main contributions:
\begin{itemize}[leftmargin=0.2in]
    \vspace{0.02in}
    \item We, for the first time, study a realistic threat model of generalizable targeted data poisoning (G-TDP), where physical variations of the target object are considered. 
    \vspace{0.02in}
    \item To address the limitation of gradient matching under our threat model, we propose to leverage both the gradients' direction and magnitude for poison optimization.
    \vspace{0.02in}
    \item Our poisoning method consistently performs the best not only in the realistic threat model but also in the subpopulation settings and the previous threat model.
\end{itemize}

\section{Related Work}
\label{sec:related}

\shortsection{Targeted Data Poisoning (TDP)}
Targeted data poisoning attacks \citep{goldblum2022dataset} tamper with ML model training without modifying test samples during inference.
\citet{koh2017understanding} introduced the initial threat model of TDP (misclassifying specific samples), and \citet{munoz2017towards} formulated it as a bi-level optimization problem. Later,
\citet{shafahi2018poison} proposed clean-label poisoning for specific test images (no label tampering, linear transfer learning), which is further improved by \citet{zhu2019transferable} and \citet{aghakhani2021bullseye}.
\citet{huang2020metapoison} presented MetaPoison, a clean-label TDP method for a training-from-scratch scenario. 
Recently,
\citet{geiping2021witches} introduced Witches' Brew, using gradient matching to enhance success rate and reduce adversarial creation time in training-from-scratch.
Our work follows the aforementioned studies but extends their scope regarding the poisoning target.
Specifically, instead of considering an exact target image, we consider a range of varying physical objects, which is more realistic and challenging.

\shortsection{Physical-Domain Attacks}
Attacks that can be physically realizable have gained increased attention, such as adversarial patches~\cite{LaVAN,Advpatch}, since they are more realistic.
Methods for crafting adversarial patches are either optimization-based~\cite{T-SEA,Advcloak,AdvTshirt,Advpatch}, using gradient descent to perturb localized image areas, or generation-based~\cite{GNAP,DM-NAP,TC-EGA}, which utilizes generative models like GANs and diffusion models.
Natural objects also serve as backdoor triggers~\citep{wenger2021backdoor, wenger2022natural} in backdoor data poisoning.
In comparison, we focus on targeted data poisoning and explore physical object variations.

\section{Generalizable Targeted Data Poisoning}
\label{sec:G-TDP}

\subsection{Preliminaries on TDP}
\label{sec:preliminaries}
In this section, we introduce the necessary notations and definitions to explain the standard threat model of targeted data poisoning.
We work with image classification tasks, and refer to a model with parameter $\theta$ as a function $f_{\theta}:\mathcal{X} \rightarrow \mathcal{Y}$, where $\mathcal{X}\subseteq{\R^d}$ represents the input space of images and $\mathcal{Y}$ denotes the set of possible class labels.
Let $\mathcal{D}$ be the underlying data distribution and $\mathcal{S} = \{(\bm{x}_i, y_i)\}_{i=1}^n$ be a set of training examples, where each individual point $(\bm{x}_i, y)$ is i.i.d. sampled from $\mathcal{D}$. 
Under the supervised learning regime, an image classification model is usually trained by solving the following optimization problem:
\begin{align*}
    \min_\theta \frac{1}{|\mathcal{S}|} \sum_{(\bm{x}, y)\in\mathcal{S}}\ell\left(f_{\theta}(\bm{x}), y\right),
\end{align*}
where $\ell$ stands for the employed training loss function, such as cross-entropy loss, and $|\mathcal{S}| = n$ denotes the set size of $\mathcal{S}$.

Following prior literature \cite{steinhardt2017certified, cina2023wild}, we formalize the problem of targeted data poisoning as a two-party security game between a poisoner and a victim model trainer:
\begin{enumerate}[leftmargin=0.2in]
    \vspace{0.01in}
    \item The poisoner knows a target sample\footnote{Although the original definition of TDP enables the poisoner to control multiple targets, previous clean-label TDP works are often established on the assumption that only one target sample exists~\citep{huang2020metapoison, geiping2021witches}. We follow this typical setting to explain the main idea of our work for simplicity, whereas we test the multi-target extension in our later experiments.}
    $\bm{x}^t$ and the dataset $\gS$ that will be used to train the victim model;
    \vspace{0.02in}
    \item The poisoner's knowledge of the training process and capability depend on specific poisoning scenarios;
    \vspace{0.02in}
    \item The poisoner generates a poisoned training set $\gS_p'$ by poisoning a subset (usually less than $1\%$) of $\gS$;
    \vspace{0.02in}
    \item The victim model trainer employs the poisoned set $\gS_p'$ to train a poisoned model $f_{\theta'}$ with parameters $\theta'$.
\end{enumerate}
\vspace{0.02in}
The poisoner aims to craft a set of data poisons $\mathcal{S}_p'$ such that the model $f_{\theta'}$ trained using the poisoned training set can misclassify many target samples to some intended adversarial class $y_{\mathrm{adv}}$ without sacrificing the \emph{validation accuracy} (i.e., classification accuracy over clean data distribution $\mathcal{D}$).

Aligned with prior work~\citep{aghakhani2021bullseye, geiping2021witches, saha2019hiddentriggerbackdoorattacks, souri2022sleeper}, we consider the setting of clean-label TDP. In particular, the poisoner is allowed to modify at most $k$ samples in $\mathcal{S}$ with imperceptible perturbations bounded in $L_\infty$-norm: $\Delta=\{\bm{\delta}_1, \bm{\delta}_2, \ldots, \bm{\delta}_k\}$, where $\|\bm{\delta}_j\|_\infty \leq \epsilon$ for any $j\in\{1,2,\ldots,k\}$ with $\epsilon$ denoting the $L_\infty$ perturbation strength.
Let $\gS_p = \{(\bm{x}_i^p, y_i^p)\}_{i=1}^k$ denote the subset of clean training samples in $\mathcal{S}$ that are going to be modified by the poisoner. Note that $\gS_p \subseteq \gS$ and $|\gS_p| \leq \alpha \cdot |\gS|$, where $\alpha$ denotes the poisoning budget that is usually a small number like $1\%$. 
After injecting the perturbations, the poisoned training set can be defined as:
\begin{equation}
\label{eq:def poisoned set}
\gS'(\Delta) = (\gS \setminus \gS_p) \cup \gS_p'(\Delta),
\end{equation}
where $\gS \setminus \gS_p$ is the remaining set of unperturbed training data and $\gS_p'(\Delta)$ stands for the set of data poisons:
\begin{equation*}
\gS_p'(\Delta) = \big\{(\bm{x}_i^p + {\bm\delta}_i, y_i^p)\big\}_{i=1}^k.
\end{equation*}
Denote by $\gL$ the averaged training loss with the option of employing some data augmentation techniques:
\begin{equation*}
    \gL(f_\theta, \gS, \mathcal{A}) = \frac{1}{|\gS|} \sum_{(\bm{x}, y) \in \gS} \ell\big(f_\theta\big(\gA(\bm{x})\big), y\big),
\end{equation*}
where $\gA:\mathcal{X}\rightarrow \mathcal{X}$ represents the augmentation operation that the victim may adopt during training. If no data augmentation is used, then $\gA$ is simply the identity mapping.

With all the technical notations in place, the objective of clean-label targeted data poisoning can thus be defined as the following bi-level optimization problem~\citep{munoz2017towards, huang2020metapoison, geiping2021witches}:
\begin{equation}
    \begin{aligned}
        \label{eq:target_bi}
        & \min_{\|\Delta\|_{\infty} \leq \epsilon} \: \ell\left( f_{
        \theta'}(\bm{x}^t), y_{\mathrm{adv}} \right), \\
        & \:\: \text{s.t.} \:\: \theta' = \argmin_\theta \: \gL\big(f_\theta, \gS'(\Delta), \mathcal{A})\big),
    \end{aligned}
\end{equation}
where $\|\Delta\|_\infty$ denotes the elementwise $L_\infty$-norm of $\Delta = \{\bm{\delta}_1,\bm{\delta}_2, \ldots, \bm{\delta}_k\}$, and in our evaluations, we follow the existing work to obtain $\theta'$ using supervised learning. 
In the following discussions, when $\Delta$ is free of context, we write $\gS' = \gS'(\Delta)$ and $\gS_p' = \gS_p'(\Delta)$ for ease of presentation.

\subsection{Our Threat Model: G-TDP}
\label{sec:threat model}

\begin{figure}[!t]
    \includegraphics[width=\linewidth]{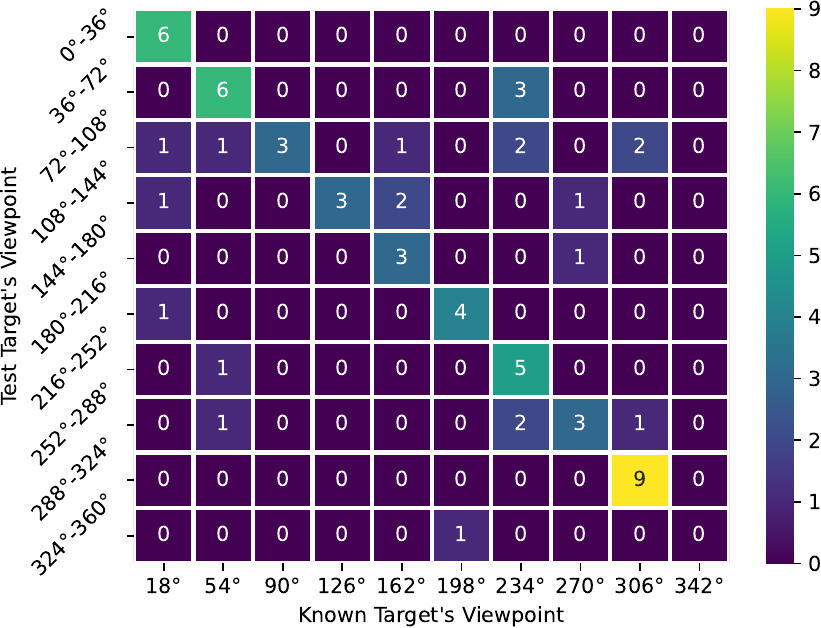}
    \caption{
    Witches' Brew \citep{geiping2021witches} is only effective when the viewpoint known by the poisoner (x-axis) is in the test viewpoint range (y-axis), implying low generalizability across target variations.
    }
    \label{wibHeatmap}
\end{figure}

A critical limitation of the standard threat model of targeted data poisoning is how attack success is measured, as defined in the optimization objective of Equation \ref{eq:target_bi}. Note that the poisoner aims to optimize the data poisons $\mathcal{S}_p'(\Delta)$ and evaluate the attack performance with respect to known target images in $\mathcal{S}_t$.
However, if considering a practical scenario where the poisoned model $f_\theta$ is deployed in the real world, any target image (typically corresponding to an object) may be subject to varying physical conditions, such as viewpoint change, background variations, and lighting alterations. We argue that focusing solely on a given known set of target samples (available to the poisoner) is insufficient to capture the actual objective of the adversary, since it is unclear whether the poisoning success can be transferred to the target object under unknown physical variations. 

Therefore, we introduce a new threat model, termed as \emph{generalizable targeted data poisoning} (G-TDP), which considers possible physical-world variations of target objects.
Formally, given a specific target object $t$ (i.e., a sports car), let $\mathcal{V}_t$ be the distribution capturing the real-world variations of the object (i.e., the same sports car from all the possible viewpoints).
We assume that the poisoner can acquire a small set of target samples from the distribution $\mathcal{V}_t$ to launch the attack (similar to the standard TDP setting). But instead of optimizing for the known targets, the poisoning success rate is evaluated over the whole distribution of target variations $\mathcal{V}_t$ in our threat model. Specifically, G-TDP can be cast into the following bi-level optimization problem:
\begin{equation}
    \begin{aligned}
    \label{eq:G-TDP}
        & \min_{\|\Delta\|_{\infty} \leq \epsilon} \: \mathbb{E}_{\bm{x}^t\sim\mathcal{V}_t} \left[\ell \left( f_{\theta'}(\bm{x}^t), y_{\mathrm{adv}} \right)\right], \\
        & \:\: \text{s.t.} \:\: \theta' = \argmin_\theta \: \gL\big(f_\theta, \gS'(\Delta), \mathcal{A})\big),
    \end{aligned}
\end{equation}
where all the notations are defined in Section \ref{sec:preliminaries}. 

To enable a feasible experimental setting, we first obtain a finite set of target samples $\tilde{\mathcal{S}}_t = \{(\bm{x}_j^t, y_{\mathrm{adv}})\}_{j=1}^M$, where each $\bm{x}_j^t$ is drawn i.i.d. from $\mathcal{V}_t$. Then, $\tilde{\mathcal{S}}_t$ is split into two non-overlapping subsets: $\mathcal{S}_t = \{(\bm{x}_j^t, y_{\mathrm{adv}})\}_{j=1}^m$, which is available to the poisoner to craft data poisons, and the remaining unused variations $\tilde{\mathcal{S}}_t \setminus \mathcal{S}_t$. $M$ is assumed to be much larger than $m$ in our evaluations, representing a more realistic setting where the poisoner is only accessible to a small portion of all possible variations.
In practical, collecting samples incurs costs, we limit the cost here to emphasize the seriousness of the threat.
Note that the key difference between Equation \ref{eq:G-TDP} and Equation \ref{eq:target_bi} is that, under our threat model of generalizable targeted data poisoning, the poisoning success needs to evaluate both available targets ($\mathcal{S}_t$) and unused variations ($\tilde{\mathcal{S}}_t \setminus \mathcal{S}_t$).

\shortsection{Preliminary Experiments} 
To study whether existing TDP methods can be generalized to our threat model, we conduct preliminary experiments to test the performance of a state-of-the-art gradient matching-based TDP method, \emph{Witches' Brew} \citep{geiping2021witches}, on the CIFAR-10 dataset with respect to varying viewpoints of a physical car in the Multi-View Car dataset (see Section \ref{sec:exp} for detailed experimental settings). 
Figure \ref{wibHeatmap} shows the heatmap regarding the number of successful poisons within the test range of angles (y-axis), when crafting data poisons using Witches' Brew based on a known target's viewpoint (x-axis). Specifically, we set $m=1$ in this experiment, meaning that each entry in the heatmap corresponds to a single given target's viewpoint, whereas during inference, we test against all the viewpoints of the target ($9$ per row). 
Our preliminary results suggest that the prior TDP method does not generalize well to unseen viewpoint variations, reflected by low values in the off-diagonal entries in Figure \ref{wibHeatmap}. The poisoning success rate is higher only when the employed target view for optimizing the data poisons falls within the test viewpoint range of the target. 

\section{Our Poisoning Method}
\label{sec:method}

\subsection{Motivation} 
\label{sec:motivation}

The gradient matching technique proposed in \citep{geiping2021witches} is based on the insight that if the poisoner can create poison samples such that for any $\theta$ encountered during the training process, the following two gradients are similar:
\begin{equation}
    \label{eq:gm}
    \frac{1}{k} \sum_{i=1}^k \nabla_\theta \ell(f_\theta(\bm{x}^p_i + \bm\delta_i), y_{\mathrm{adv}}) \approx \frac{1}{m} \sum_{j=1}^m \nabla_\theta \ell(f_\theta(\bm{x}_j^t), y_{\mathrm{adv}}),
\end{equation}
where $m$ is the number of available target samples and $k$ is the number of data poisons, the model behavior on the poison samples may perform similarly to that on the target samples.
However, since the model gradients will dynamically change during the training process depending on the data poisons, there is no guarantee that such a condition can be satisfied. 
As an alternative, the authors proposed matching the gradients' direction in cosine distance under an alternating minimization framework as a heuristic to approximately achieve the condition specified by Equation \ref{eq:gm}.
To be more specific, the gradient matching objective is defined based on the following cosine similarity loss:
\begin{equation}
    \label{eq:cos}
    D_{\mathrm{cos}}(\gL_p, \gL_t) = 1 - \cos(\nabla_\theta \gL_p, \nabla_\theta \gL_t),
\end{equation}
where $\mathcal{L}_p$ and $\mathcal{L}_t$ represent the averaged model loss over the set of data poisons and the target samples, respectively:
\begin{align}
    \label{eq:aug}
    \gL_p = \gL(f_\theta, \gS_p', \gA), \:\: \gL_t = \gL(f_\theta, \gS_t, \varnothing).
\end{align}
Note that we disable the option of data augmentation in $\mathcal{L}_t$, because the test samples will not be augmented during inference.
Looking at Equation \ref{eq:aug}, one may think that performing data augmentation also on the given samples in $\mathcal{S}_t$ for improving generalization across unseen target variations.
However, our exploratory experiments show that it is not the case (see Appendix~\ref{app:augment}). This suggests that digital transformations are fundamentally different from physical (object) variations.

Although relying on cosine distance to realize gradient matching is sensible, it is not always optimal. For small or thin victim models (e.g., a ResNet18 with smaller width in convolutional layers), directly optimizing squared Euclidean distance is more effective than optimizing $D_{\mathrm{cos}}$ (see the remark in Section 3.3 of \citep{geiping2021witches} for discussions).
The adversarial loss with Euclidean distance is given by:
\begin{equation}
     \label{eq:ed}
     D_{\mathrm{ED}}(\gL_p, \gL_t) = \| \nabla_\theta \gL_p - \nabla_\theta \gL_t\|_2^2.
\end{equation}

\noindent\textbf{Why the Magnitude of Gradients Matters?}
$D_{\mathrm{ED}}$ measures the magnitude difference between the model gradients of data poisons and the known target samples.  
Gradients with larger magnitudes exert a more significant influence on the model's parameters during gradient descent. 
Considering our TDP problem, such gradients can induce a more substantial drop in the target samples' loss, and this drop can even generalize to semantically similar samples (e.g., other views of a physical car). For more details, see \textbf{Proposition 1} and its proofs in Appendix~\ref{app:augment}. 
However, the clean versions of our poison samples are correctly labeled, meaning they inherently possess gradients with small magnitudes.
Suppose these clean-label poison samples are optimized without constraining their gradient magnitudes. In that case, the magnitude of their gradients is not guaranteed to be sufficiently large to counteract the influence from clean samples.

The magnitude of the target gradient is large due to their incorrect labeling (labeled with $y_\mathrm{adv}$). 
Optimizing the Euclidean distance between the gradients of poison samples and those of target samples increases the magnitude of the poison gradient. 
Given that the Euclidean distance is non-negative and satisfies the definition of gradient-matching poisoning, it is ideal for incorporation into the adversarial loss. 
When the gradients of data poisons exhibit the same cosine similarity to those of target samples as when directly optimizing $D_{\mathrm{cos}}$ but with a smaller Euclidean distance, they can yield better performance and generalization.

\begin{figure}[t]
    \centering
    \subfloat[Cosine Similarity]{
        \includegraphics[width=0.48\linewidth,height=0.40\linewidth]{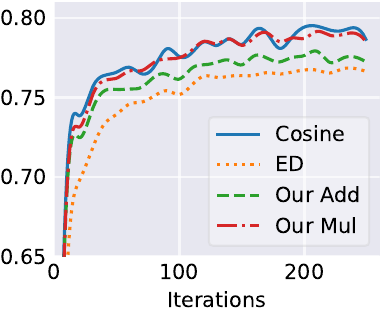}
    }
    \subfloat[Euclidean Distance]{
        \includegraphics[width=0.48\linewidth,height=0.40\linewidth]{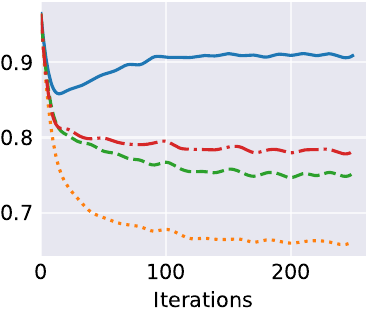}
    }
    \caption{(a) Cosine Similarity and (b) Euclidean distance curves during the optimization of gradient matching with various adversarial losses. Here, the Euclidean distance is regularized.}
    \label{fig:single_aug}
\end{figure}

\begin{algorithm*}[!t]
    \caption{Clean-label Generalizable Targeted Data Poisoning}
    \label{alg:sp_attack}
    \KwIn{Victim model $f_\theta$, known variations of the target object $\gS_t$, adversarial intended class $y_{\mathrm{adv}}$, clean training set $\gS$, perturbation strength $\epsilon$, optimization steps $S$, retraining times $R$}
    
    Randomly initialize perturbations $\Delta = \{\bm{\delta}_1, \ldots, \bm{\delta}_k\}$ with $\|\Delta\|_\infty \leq \epsilon$;

    $\gS_p' \leftarrow \{(\bm{x}_i^p + \bm{\delta}_i, y_{\mathrm{adv}})\}_{i=1}^k$; \algorithmiccomment{\textit{$\gS_p$ will dynamically change with $\Delta$}}

    \For{$s=1, 2, \cdots, S$}{
        Compute the adversarial loss $D_{\mathrm{mul}}(\gL_p, \gL_t)$ with respect to $\gS_p'$ and $\gS_t$;
        \algorithmiccomment{\textit{$D_{\mathrm{mul}}$ is defined in Equation \ref{eq:mul}}}
    
        Update $\Delta$ using projected gradient descent on $D_{\mathrm{mul}}(\gL_p, \gL_t)$;
        \algorithmiccomment{$D_{\mathrm{mul}}$ can be replaced by other losses}

        Project $\Delta$ onto $\| \Delta \|_\infty \leq \epsilon$;

        Retrain the victim model using poisoned set $\gS'$ every $\lfloor S / (R + 1)\rfloor$ iterations except $s=S$;
    }

    \KwOut{Poisoned training dataset $\gS'$}
\end{algorithm*}

\subsection{Detailed Design}
\label{sec:detailed design}
Based on the above finding, we need to optimize the gradient direction and Euclidean distance simultaneously.
Therefore, we introduce two possible loss functions to combine the $D_{\mathrm{ED}}$ and $D_{\mathrm{cos}}$ distances:
\begin{align}
    \label{eq:add}
    D_{\mathrm{add}}(\gL_p, \gL_t) &= \frac{\sqrt{D_{\mathrm{ED}}(\gL_p, \gL_t)}}{\| \nabla_\theta\gL_t\|_2} + D_{\mathrm{cos}}(\gL_p, \gL_t), \\
    \label{eq:mul}
    D_{\mathrm{mul}}(\gL_p, \gL_t) &= D_{\mathrm{ED}}(\gL_p, \gL_t) \cdot D_{\mathrm{cos}}(\gL_p, \gL_t).
\end{align}
In particular, because the $D_{\mathrm{ED}}$ and $D_{\mathrm{cos}}$ losses are not at the same order of magnitude (for example, $D_{\mathrm{ED}}$ is usually more than $10,000$ on ResNet-18), Equation \ref{eq:add} regularizes the $D_{\mathrm{ED}}$ with the $L_2$-norm of the model gradients on the target samples.
Equation \ref{eq:mul} further alleviates the problem by multiplying $D_{\mathrm{ED}}$ and $D_{\mathrm{cos}}$.
Figure~\ref{fig:single_aug} validates that the above designs meet our expectations.
More specifically, $D_{\mathrm{mul}}$ matches the gradients' direction better, while $D_{\mathrm{add}}$ is better at minimizing the Euclidean distance.

\Algref{alg:sp_attack} shows the pseudocode of our proposed poisoning method.
We also incorporate a retraining step~\citep{souri2022sleeper} to dynamically capture the effect of current perturbations (see ablation studies in \Secref{sec:ablation} for supporting evidence).

\begin{table*}[t]
\centering
\caption{Poisoning results with target physical objects from the Multi-View Car dataset. Cosine and ED are state-of-the-art TDP losses from \cite{geiping2021witches}. ``$\mathrm{ImageNet}^*$'' denotes the commonly adopted subset of ImageNet with 100 classes~\cite{huang2021unlearnable, fu2022robust, wang2024provably}. We also evaluate the effectiveness on the complete ImageNet, and the poisoning success rate is $99.21\%$. The victim model and the poisoner's surrogate model are the same.}
\label{tab:cifar10-whitebox}
\resizebox{\textwidth}{!}{
\begin{tabular}{ll|ccccc|ccccc}
\toprule
\multirow{2.4}{*}{\textbf{Dataset}}       & \multirow{2.4}{*}{\textbf{Victim Model}} & \multicolumn{5}{c|}{\textbf{Validation Accuracy} (\%)} & \multicolumn{5}{c}{\textbf{Poisoning Success Rate} (\%)} \\ \cmidrule(l){3-12} 
&                               & w/o & Cosine    & ED & $\mathrm{Our}_{\mathrm{add}}$     & $\mathrm{Our}_{\mathrm{mul}}$    & w/o    & Cosine & ED   & $\mathrm{Our}_{\mathrm{add}}$ & $\mathrm{Our}_{\mathrm{mul}}$           \\ \midrule
\multirow{4}{*}{CIFAR-10} & ConvNet64 & $85.21$ & $85.09$ & $84.62$ & $84.87$ & $84.91$ & $0.00$ & $30.38$ & $75.43$ & $\mathbf{82.30}$ & $76.48$ \\
                          & VGG11 & $88.89$ & $88.89$ & $88.66$ & $88.75$ & $88.80$ & $0.67$ & $75.15$ & $77.30$ & $80.18$ & $\mathbf{84.44}$ \\
                          & ResNet-18 & $92.15$ & $91.99$ & $91.94$ & $91.98$ & $91.99$ & $1.11$ & $82.40$ & $87.29$ & $88.85$ & $\mathbf{90.13}$ \\
                          & MobileNet-V2 & $88.21$ & $88.10$ & $87.74$ & $87.82$ & $87.93$ & $0.64$ & $67.03$ & $68.09$ & $73.69$ & $\mathbf{81.87}$ \\ \midrule
$\mathrm{ImageNet}^*$                  & ResNet-18 & $75.03$ & $74.82$ & $74.77$ & $74.84$ & $74.79$ & $0.00$ & $48.41$ & $16.53$ & $45.13$ & $\mathbf{56.16}$ \\ \bottomrule
\end{tabular}
}
\end{table*}

\section{Experiments}
\label{sec:exp}
We conduct experiments to test the efficacy of our design on image benchmarks and consider physical targets with viewpoint variations derived from the Multi-View Car dataset. 
More specifically, we compare our G-TDP losses with previous TDP losses under both standard (the poisoner knows the victim model's architecture) and transfer settings (Section \ref{sec:exp results}). 
Further, we explore the robustness of our method against poisoning defenses (Section \ref{sec:defense}).
To examine other types of variations, we create a handmade dataset that involves viewpoint, background, and lighting changes, and test our method's performance (Section \ref{sec:eval_misc}) \footnote{The implementations of our method are available at: \url{https://github.com/zhizhen-chen/generalizable_tdp}.}.

\subsection{Experimental Setup}
\label{sec:end2end}

\begin{figure}[t]
    \centering
    \subfloat[Multi-View Car]{
        \label{fig:visualization multiview}
        \includegraphics[width=0.48\linewidth,height=0.30\linewidth]{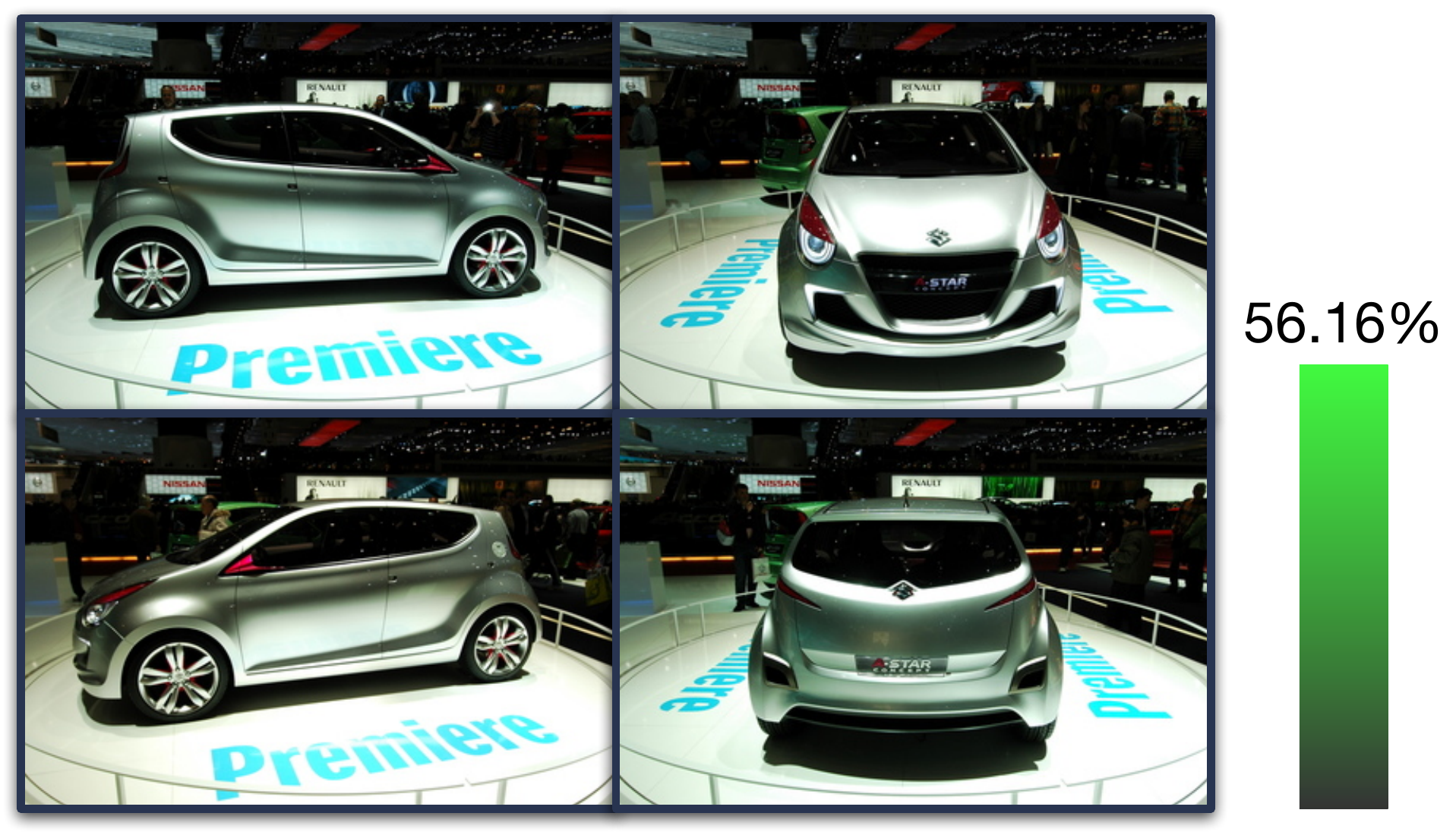}
    }
    \subfloat[Handmade]{
        \label{fig:visualization handmade}
        \includegraphics[width=0.48\linewidth,height=0.30\linewidth]{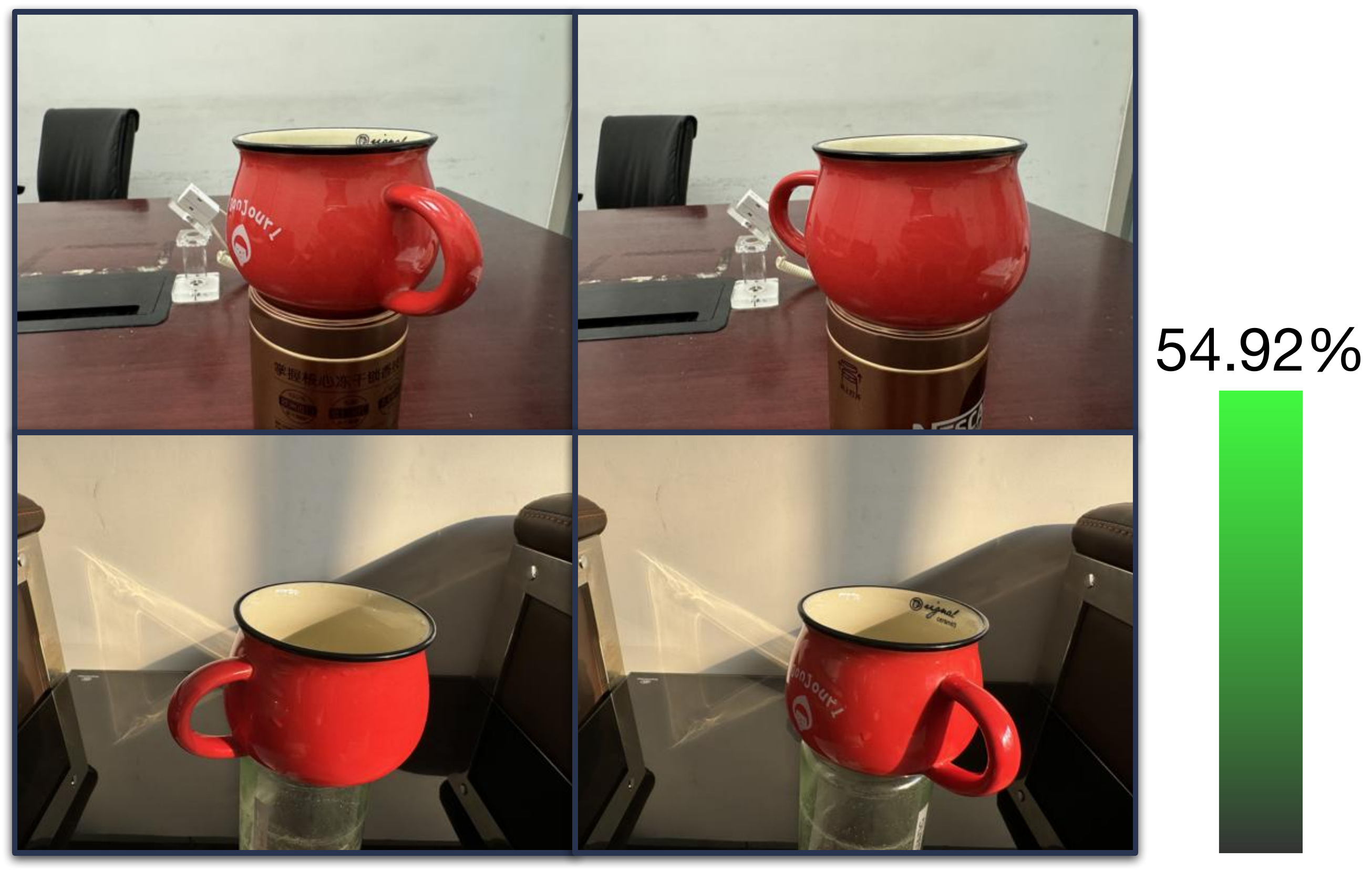}
    }
    \caption{Visualizations of target samples in the Multi-View Car and the handmade datasets. The green bar beside the examples denotes the poisoning success rate on ImageNet.}
    \label{fig:visualization}
\end{figure}

\shortsection{Target Object Variations}
The target physical objects are derived from the Multi-View Car dataset \citep{ozuysal2009pose}, which provides $20$ distinct cars, each with around $100$ photos taken with a $360$-degree rotation.
We regard a specific car as a target physical object $\bm{x}^t$ and the viewpoint changes as the variations $\gV_t$.
This setting is ideal for conducting experiments in our threat model because it contains challenging viewpoint variations and moderate background variations.
We provide the visualizations of target samples in Figure~\ref{fig:visualization multiview}.

As mentioned in Section~\ref{sec:threat model}, the difficulty of G-TDP against physical objects is to enhance the generalizability of the poisoning with a limited number of known target images. 
To examine this capability, we follow \citet{aghakhani2021bullseye}, providing the poisoner with $10$ images taken from different angles while the remaining images are designated test samples.
We rigorously separate the samples for the poisoner and for validation to prevent data leakage, just like what we do in other machine learning problems.
This measure offers an additional benefit: it accounts for images that are unknown to the poisoner, as the poisoner cannot always anticipate the target image during the inference time.

\shortsection{Datasets \& Configurations}
We evaluate our method under G-TDP with target physical objects on CIFAR-10 and a subset of ImageNet.
The poisoner contaminates $1\%$ of the CIFAR-10 training set with the following hyperparameters used in Algorithm \ref{alg:sp_attack}: $\epsilon=16$, $S=250$, and $R=4$. 
Both the poisoner and the victim enable the differentiable data augmentation during the model training process~\cite {geiping2021witches}.
The subset of ImageNet contains $100$ classes, which is a typical setting adopted by prior work \citep{huang2021unlearnable, fu2022robust, wang2024provably}.
We make this reduction for the sake of efficiency.
In particular, we select the first $100$ classes and replace the last few classes with those we are interested in, including ``sports car'' and a few other classes discussed in Section~\ref{sec:eval_misc}.
The poisoner contaminates $0.5\%$ of the subset of the ImageNet. 
The hyperparameters are $\epsilon=8$, $S=250$, $R=2$.
Besides, we conduct a single trial over the complete ImageNet dataset to validate whether the poisoning is effective in the large-scale training.

\begin{table}[!t]
\centering
\caption{SR (\%) evaluated under transferability settings. The poisoner constructs poisoned samples via a VGG11 surrogate model.}
\label{tab:transfer}
\resizebox{0.95\linewidth}{!}{
\begin{tabular}{l|cccc}
\toprule
\textbf{Victim} & Cosine & ED & $\mathrm{Our}_{\mathrm{add}}$ & $\mathrm{Our}_{\mathrm{mul}}$ \\
\midrule
ResNet-18    & $61.13$ & $69.67$ & $75.39$ & $\mathbf{80.17}$ \\
MobileNet-V2 & $41.46$ & $74.20$ & $\mathbf{74.89}$ & $74.42$ \\
LeViT-384    & $51.34$ & $59.45$ & $60.31$ & $\mathbf{61.09}$ \\
Average      & $51.31$ & $67.77$ & $70.20$ & $\mathbf{71.89}$ \\
\bottomrule
\end{tabular}
}
\end{table}

If not explicitly mentioned, the default setting of experiments is conducted based on the ResNet-18 using the CIFAR-10 dataset.
We repeat each experiment using $10$ different seeds to mitigate the impact of randomness on the results, and the poison samples corresponding to each seed are evaluated through $8$ model training trials. More experimental details and visualization figures are provided in Appendix~\ref{app:experiment details} and \ref{app:visualization}.

\subsection{Main Results}
\label{sec:exp results}

Table~\ref{tab:cifar10-whitebox} demonstrates the \emph{poisoning success rate} (SR) in various model architectures toward varying physical objects.
The experiment setting is standard, where the victim model is the same as the surrogate model used by the poisoner, just as in the previous works \cite{huang2020metapoison, geiping2021witches}.
As a result, our proposed framework and improved adversarial losses can significantly enhance the targeted poisoning's generalizability in various settings.
$D_{\mathrm{ED}}$ consistently achieves a high SR when the training dataset is CIFAR-10, while its SR has plunged when the training dataset is ImageNet.
The same phenomenon also occurs with $D_{\mathrm{add}}$, where the Euclidean distance accounts for a larger proportion than $D_{\mathrm{mul}}$.
$D_{\mathrm{mul}}$ achieves the best poisoning results in various settings, except when the victim model is ConvNet64. 
Therefore, $D_{\mathrm{mul}}$ is a good choice for the adversarial loss for G-TDP.

We also present the side effects on the validation accuracy of the validation set.
$D_{\mathrm{cos}}$ achieves the minimum impact on the validation accuracy.
While $D_{\mathrm{mul}}$ results in a high SR, its impact on the validation accuracy is smaller than $D_{\mathrm{ED}}$ and $D_{\mathrm{add}}$.

\begin{table}[t]
\centering
\caption{SR (\%) of gradient matching when \emph{Victim} (V) and/or \emph{Poisoner} (P) apply or not apply data augmentations.}
\label{tab:augment}
\resizebox{\linewidth}{!}{
\begin{tabular}{l|cc|cccc}
\toprule
\textbf{Dataset} & \textbf{V} & \textbf{P} & Cosine & ED & $\mathrm{Our}_{\mathrm{add}}$ & $\mathrm{Our}_{\mathrm{mul}}$ \\ \midrule
\multirow{4}{*}{CIFAR-10} & \checkmark    & \checkmark      & $82.40$ & $87.29$ & $88.85$ & $\mathbf{90.13}$ \\
& {\sffamily X} & \checkmark      & $12.14$ & $29.13$ & $\mathbf{34.97}$ & $33.57$ \\
& \checkmark    & {\sffamily X}   & $38.76$ & $\mathbf{66.09}$ & $63.51$ & $62.88$ \\ 
& {\sffamily X} & {\sffamily X}   & $39.26$ & $81.17$ & $\mathbf{81.31}$ & $79.42$ \\ \midrule \midrule
\multirow{2}{*}{$\mathrm{ImageNet}^\mathrm{*}$} & \checkmark    & \checkmark      & $48.41$ & $16.53$& $45.13$ & $\mathbf{56.16}$ \\
& {\sffamily X} & {\sffamily X}   & -       & -       & $25.94$ & $\mathbf{26.54}$ \\
\bottomrule
\end{tabular}
}
\end{table}

\shortsection{Cross-Model Transferability} 
We study the effectiveness of our proposed poisoning method in a more challenging scenario, where the poisoner does not know the model architecture adopted by the victim. 
The poisoner creates the poison samples via a surrogate model, and we evaluate them in another victim model with a different architecture.
Here, we adopt VGG11 as the surrogate model due to its superior transferability (see Section B.2 in our supplemental materials for details).
Table~\ref{tab:transfer} summarizes the results, where $D_{\mathrm{mul}}$ achieves the best average SR, and
$D_{\mathrm{add}}$ shows similar transferability to $D_{\mathrm{mul}}$. This validates that our proposed adversarial losses have better performance than previous ones.
Besides, although the poison samples are constructed in a CNN-based model (VGG11), they can successfully transfer to the LeViT-384 \citep{graham2021levit} model, which involves attention layers. 
We confirm this result in a larger modern ViT model, Swin V2, which achieves a success rate of $41.03\%$.

\shortsection{Data Augmentation for Poisoner and Victim}
We find that TDP is sensitive to data augmentation.
Table~\ref{tab:augment} shows the experimental results of altering the augmentation setting, where the SR has dropped significantly when the data augmentation settings of the victim and poisoner are not aligned.
This result is consistent with the observations in \citep{geiping2021witches}.
$D_{\mathrm{mul}}$ achieves lower performance than $D_{\mathrm{ED}}$ and $D_{\mathrm{add}}$ in some cases when the training dataset is CIFAR-10, but it consistently performs better when the training dataset is ImageNet.
We also find that when the poisoner does not know the victim's augmentation setting, a decent impact can be consistently achieved by disabling data augmentation.

\subsection{Defenses against Data Poisoning}
\label{sec:defense}

\begin{figure}[t]
    \centering
    \subfloat[DP~\cite{hong2020effectiveness}]{              \includegraphics[width=0.50\linewidth,height=0.36\linewidth]{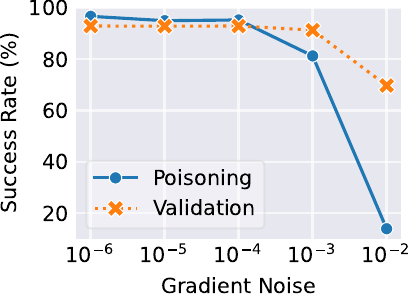}
    }
    \hspace{0.05in}
    \subfloat[EPIc~\cite{yang2022not}]{            \includegraphics[width=0.42\linewidth,height=0.35\linewidth]{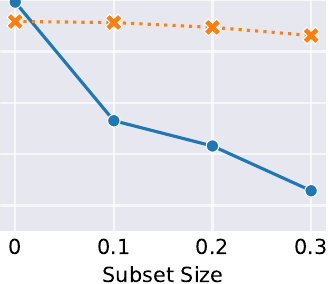}
    }
    \caption{Poisoning success rates against robust training defenses.}
    \label{fig:robust}
\end{figure}

\shortsection{Robust Training}
Robust training refers to a defense that alters the model training process to avoid the influence of the poison samples, but does not modify the samples in the training set.
In particular, we test two robust training methods: \textit{Differential Privacy} (DP) \citep{hong2020effectiveness} and \textit{Effective Poison Identification} (EPIc) \citep{yang2022not}.
Figure~\ref{fig:robust} shows the effectiveness of the two methods, where DP can defend against our poisoning method at the cost of sacrificing much of the model's validation accuracy. 
Although EPIc incurs a smaller cost in terms of accuracy when conducting effective defense, $20\%$ of images of the physical target will still be affected.

\shortsection{Poison Removal}
Poison removal refers to a defense that transforms all training samples in the training set to detoxify the training set, which does not change the model training process.
We test three poison removal methods discussed in \citet{liu2023image}: Gaussian blur, JPEG compression, and bit depth reduction (BDR).
These methods are originally designed to mitigate indiscriminate data poisoning, and we expect these defenses to also be effective against TDP.
Table~\ref{tab:defenses} reports the different defense strategies' performance, with a tradeoff between the defense performance and the model's classification accuracy.
Specifically, JPEG and BDR invalidated the poison samples, whereas BDR had a smaller impact on the validation accuracy.

\begin{table}[!t]
\centering
\caption{Validation accuracy ($\%$) and SR ($\%$) achieved by our method against different poison removal defenses.}
\label{tab:defenses}
\resizebox{0.95\linewidth}{!}{
\begin{tabular}{l|c|ccc}
\toprule
\textbf{Defense}           & w/o     & Gaussian & JPEG    & BDR \\ \midrule
Validation Acc. & $91.99$ & $91.92$  & $83.60$ & $89.28$ \\ 
Success Rate        & $90.13$ & $90.02$  & $0.07$  & $2.48$  \\
\bottomrule
\end{tabular}
}
\end{table}

\subsection{Poisoning to Other Physical Targets}
\label{sec:eval_misc}

Although we conduct comprehensive experiments on poisoning the physical cars in the Multi-View Car dataset, only the viewpoint and background variations are considered.
To find out whether our proposed method is effective on other physical objects and other variations, including viewpoint, background, and lighting variations, we conducted additional experiments on the target physical objects derived from our handmade dataset.
All physical objects in this dataset are photographed from different viewpoints in two distinct scenes (see Figure~\ref{fig:visualization handmade} for illustrations).
We follow the previous setting on ImageNet, while we designated five other targets. 
Due to the randomness of photography, the number of images of each object is different. 
We provide $10$ images in each scene (total $20$ images) to the poisoner.
The remaining images are used to evaluate the attack's performance.
As a result, our proposed method achieved $54.92\%$ SR.
We also compose a single trial on the complete ImageNet dataset, and the SR is $100.0\%$.

\section{Further Analysis}
\label{sec:discussion}

\subsection{Evaluation under Previous Threat Model}
\label{sec:benchmark}

\begin{table}[!t]
\centering
\caption{SR (\%) under the previous TDP threat model \citep{schwarzschild2021just}, where the poisoner uses ResNet-18 as the surrogate model. Here, $K$ is the number of assembled models with different initializations.}
\label{tab:benchmark}
\resizebox{\linewidth}{!}{
\begin{tabular}{l|cccc}
\toprule
\textbf{Method}        & RN-18 & MN-V2 & VGG11 & Avg. \\ \midrule
BP \citep{aghakhani2021bullseye} & $3.00$ & $3.00$ & $1.00$ & $2.33$ \\
WiB \citep{geiping2021witches} ($K=1$) & $45.00$ & $36.00$ & $8.00$ & $29.67$ \\
WiB ($K=4$)      & $55.00$ & $37.00$ & $7.00$ & $33.00$ \\ \midrule
ED ($K=1$) & $19.00$ & $7.00$ & $\mathbf{30.00}$ & $18.67$ \\
$\mathrm{Our}_{\mathrm{mul}}$ ($K=1$, Retrain)  & $38.00$ & $27.00$ & $6.00$ & $23.67$ \\
$\mathrm{Our}_{\mathrm{mul}}$ ($K=1$)           & $52.00$ & $42.00$ & $20.00$ & $38.00$ \\
$\mathrm{Our}_{\mathrm{mul}}$ ($K=4$)           & $\mathbf{61.00}$ & $\mathbf{57.00}$ & $19.00$ & $\mathbf{45.67}$ \\  \bottomrule
\end{tabular}
}
\end{table}

According to Section~\ref{sec:method}, we expect our loss design can also achieve poisoning performance under previous TDP settings.
We replace the adversarial loss in the gradient matching framework~\citep{geiping2021witches} and validate its performance in the benchmark proposed by \citet{schwarzschild2021just}.
Table~\ref{tab:benchmark} demonstrates that our loss $D_{\mathrm{mul}}$ achieves the best performance both in SR and transferability to different model architectures.
Although $D_{\mathrm{ED}}$ achieves a better performance than $D_{\mathrm{cos}}$ in our threat model, its performance is significantly lower than $D_{\mathrm{cos}}$ in other settings.
Besides, adding a retraining step does not help boost TDP's performance in this scenario, which is worth further study.

\subsection{Subpopulation Data Poisoning}
\label{sec:subp}

\begin{table}[t]
\centering
\caption{SR (\%) under subpopulation data poisoning settings.}
\label{tab:subpopulation}
\resizebox{0.95\linewidth}{!}{
\begin{tabular}{l|cccc}
\toprule
\textbf{Victim Model} 
             & Cosine   & ED       & $\mathrm{Our}_{\mathrm{add}}$ & $\mathrm{Our}_{\mathrm{mul}}$ \\ \midrule
ConvNet      & $14.47$ & $48.46$  & $49.85$  & $\mathbf{50.70}$ \\
VGG11        & $47.45$ & $58.27$  & $56.10$  & $\mathbf{61.72}$ \\
ResNet-18    & $59.94$ & $65.86$  & $69.10$  & $\mathbf{71.24}$ \\
MobileNet-V2 & $37.47$ & $64.20$  & $58.35$  & $\mathbf{65.26}$ \\ \bottomrule
\end{tabular}
}
\end{table}

Our method's superior generalization performance makes it a promising solution for threat models other than TDP.
Subpopulation data poisoning, proposed by \citet{jagielski2021subpopulation}, focuses on manipulating the model's behavior over a set of test target samples. Unlike our focus on a specific physical object with varying conditions, these test samples usually correspond to multiple objects.
Nevertheless, our method can be adapted to conduct clean-label subpopulation data poisoning when the subpopulation images share similar visual features (e.g., from the same label class).

We follow the experiment setting with the CIFAR-10 dataset described in Section~\ref{sec:end2end} but alter the target object from a car to a bird species, where the target images are from the CUB-200-2011 dataset \citep{wah2011caltech}.
Note that there are more variations in a bird species than in a multi-view car.
The poisoner has access to $20$ samples from the target bird species, and the poisoning success rate is computed using the remaining samples from the subpopulation.
Table~\ref{tab:subpopulation} reports the SR with respect to the subpopulation target, where the adversarial loss $D_{\mathrm{mul}}$ keeps the best performance, suggesting the potential of our method for the more challenging task of clean-label subpopulation data poisoning.

\subsection{Ablation Studies}
\label{sec:ablation}

\shortsection{Poisoning budget}
Increasing the poisoning budget (i.e., the number of crafted poison samples) can incur a higher probability for the poisoner to be spotted~\cite{carlini2024poisoning}.
Therefore, the threat is more serious if the poisoner achieves the objective at a lower poisoning budget.
Figure~\ref{fig:ablation budget} depicts the relationship between the poisoning budget and the success rate.
A higher budget brings a higher SR, and our method can achieve decent performance even if the poisoning budget is $0.2\%$ of the training set size.
The loose target setting (subpopulation) is more sensitive to the poisoning budget.

\shortsection{Known Target Variants}
G-TDP is expected to be sensitive to the number of available target variants since it indicates how complete the poisoner's knowledge of the target physical object is.
Figure~\ref{fig:ablation variation} shows that SR achieved by our method can be improved by increasing the number of known target variants. 
However, a large number also suggests a higher cost of collecting the variants.

\begin{figure}[!t]
    \centering   
    \subfloat[Poisoning Budget]{
        \label{fig:ablation budget}
        \includegraphics[width=0.48\linewidth,height=0.315\linewidth]{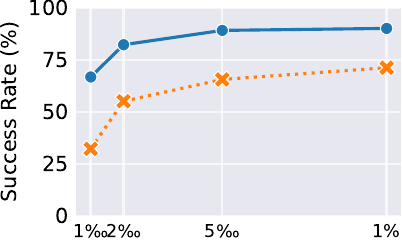}
    }
    \subfloat[Known Target Variants]{
        \label{fig:ablation variation}
        \includegraphics[width=0.44\linewidth,height=0.305\linewidth]{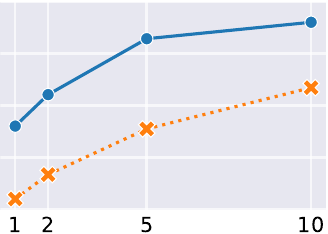}
    }
    \newline
    \subfloat[Perturbation Size]{
        \includegraphics[width=0.48\linewidth,height=0.315\linewidth]{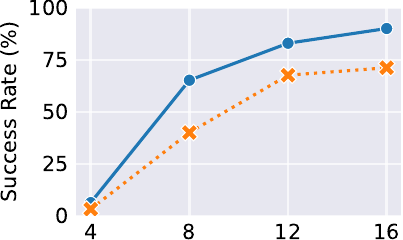}
        \label{fig:ablation size}
    }
    \subfloat[Retraining Times]{
        \includegraphics[width=0.44\linewidth,height=0.305\linewidth]{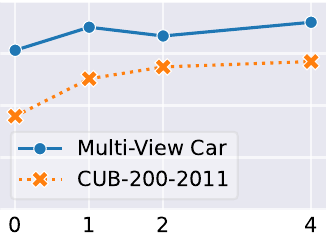}
        \label{fig:ablation retraining}
    }
    \caption{Ablations on Multi-View Car and CUB-200-2011.}
    \label{fig:ablation}
\end{figure}

\shortsection{Perturbation Size}
To satisfy the clean-label requirement, a $L_\infty$-norm $\epsilon$-bound is used to limit the perturbation size.
A smaller $\epsilon$ will bring a more subtle alteration to the poison samples.
Figure~\ref{fig:ablation size} reports how different $\epsilon$ impacts the SR, where the $\epsilon$ influences it more significantly than the poisoning budget.
Our poisoning method achieves a decent performance only for the $\epsilon \geq 8$.

\shortsection{Retraining Times}
Figure~\ref{fig:ablation retraining} shows that the more retraining times, the better performance our method can achieve, consistent with the result of \citet{souri2022sleeper}.
The number of retraining times is linearly related to the computational cost of adversarial learning, so excessive retraining times can become very expensive for the poisoner.

\section{Conclusion}

We introduced a more realistic threat model of G-TDP, generalizing from targeting a specific image to a set of varying physical objects. 
After analyzing the standard gradient matching algorithm, we proposed a novel adversarial loss that combines cosine similarity and Euclidean distance. 
Comprehensive experiments validated the superior generalization performance of our method.


{
    \small
    \bibliographystyle{ieeenat_fullname}
    \bibliography{main}
}

\appendix

\section{Smaller Gradient Magnitude Leads to More Generalizable Data Poisons}
\label{app:proposition}
We have claimed in Section~\ref{sec:motivation} that when the gradient of poison samples has a smaller Euclidean distance to the target gradient, the crafted data poisons can better generalize to semantically similar samples.
In this section, we provide a more formal proof.

First, noticing that under our G-TDP method, we regard the samples of the target physical object as a whole and let the poison samples match their average gradient.
That is, we poison an average target and hope the impact can be generalized to similar targets (other samples of the physical object).
Let $\bm{x}^t$ be the average target, and assume all samples of the physical object have a similar gradient (lie in an $L_2$ ball) to $\bm{x}^t$.
Let $\gL_{p, \mathrm{cos}}$ be the loss of poison samples optimized by $D_\mathrm{cos}$, and $\gL_{p, \mathrm{our}}$ be another loss of poison samples, which has a smaller Euclidean distance to the target gradient.
We wonder which condition of a sample $\bm{x}$ satisfies, the poison samples of $\gL_{p, \mathrm{our}}$ will influence it more significantly than the poison samples of $\gL_p$.

\begin{proposition}
Let $\theta$ be model parameters that satisfy
\begin{align*}
    & \cos(\nabla_\theta\gL_t, \nabla_\theta\gL_{p, \mathrm{our}}) = \cos(\nabla_\theta\gL_t, \nabla_\theta\gL_{p, \mathrm{cos}}) = c > 0, \\
    & \|\nabla_\theta\gL_t - \nabla_\theta\gL_{\mathrm{p, our}}\|_2 < \|\nabla_\theta\gL_t - \nabla_\theta\gL_{\mathrm{cos}}\|_2, \\
    & \|\nabla_\theta\gL_{p, \mathrm{cos}}\|_2 < c \cdot \|\nabla_\theta\gL_t\|_2.
\end{align*}
The first formula ensures that the two groups of poison samples match the gradients with the same cosine similarity. 
The second formula represents that the gradient of our poison samples has a smaller Euclidean distance to the target gradient.
The third formula holds since the magnitude of $\|\nabla_\theta\gL_t\|_2$ is large, but the magnitude of $\|\nabla_\theta\gL_{p, \mathrm{cos}}\|_2$ is much less than it, as we mentioned in Section~\ref{sec:motivation}.

If performing a single step of gradient descent with $\gL_p$
\begin{equation}
    \theta' = \theta - \eta \nabla_\theta \gL_p,
\end{equation}
then for all samples $\bm{x}$ that satisfies
\begin{equation}
    \label{eq:5}
    \begin{aligned}
    &\|\nabla_\theta \ell(f_\theta(\bm{x}), y_{\mathrm{adv}}) - \nabla_\theta \gL_t\|_2 < \\
    & \qquad \frac{\langle \nabla_\theta\gL_t, \nabla_\theta\gL_{\mathrm{p, our}} - \nabla_\theta\gL_{p, \mathrm{cos}}\rangle}{\|\nabla_\theta\gL_{\mathrm{p, our}} - \nabla_\theta\gL_{p, \mathrm{cos}}\|_2},
    \end{aligned}
\end{equation}
we have
\begin{equation}
    \label{eq:6}
    \ell(f_{\theta'_{\mathrm{our}}}(\bm{x}), y_{\mathrm{adv}}) < \ell(f_{\theta'_{\mathrm{cos}}}(\bm{x}), y_{\mathrm{adv}}).
\end{equation}
\end{proposition}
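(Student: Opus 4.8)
The plan is to carry out a first-order (Taylor) analysis of the loss at the base parameter $\theta$ along the two descent directions and then reduce the desired inequality to a single inner-product sign condition. Writing $\bm{g} = \nabla_\theta \ell(f_\theta(\bm{x}), y_{\mathrm{adv}})$, a first-order expansion of the loss after the single step $\theta'_{\bullet} = \theta - \eta\nabla_\theta\gL_{p,\bullet}$ gives
\begin{equation*}
\ell(f_{\theta'_{\bullet}}(\bm{x}), y_{\mathrm{adv}}) \approx \ell(f_\theta(\bm{x}), y_{\mathrm{adv}}) - \eta\,\langle \bm{g}, \nabla_\theta\gL_{p,\bullet}\rangle,
\end{equation*}
so that subtracting the $\mathrm{cos}$ case from the $\mathrm{our}$ case cancels the common zeroth-order term and yields $\ell(f_{\theta'_{\mathrm{our}}}(\bm{x}), y_{\mathrm{adv}}) - \ell(f_{\theta'_{\mathrm{cos}}}(\bm{x}), y_{\mathrm{adv}}) \approx -\eta\,\langle \bm{g}, \bm{d}\rangle$, where $\bm{d} = \nabla_\theta\gL_{p,\mathrm{our}} - \nabla_\theta\gL_{p,\mathrm{cos}}$. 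Since $\eta > 0$, proving Equation \ref{eq:6} reduces to showing $\langle \bm{g}, \bm{d}\rangle > 0$.

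Next I would decompose $\bm{g}$ relative to the target gradient $\bm{t} = \nabla_\theta\gL_t$ as $\bm{g} = \bm{t} + (\bm{g} - \bm{t})$, giving $\langle \bm{g}, \bm{d}\rangle = \langle \bm{t}, \bm{d}\rangle + \langle \bm{g} - \bm{t}, \bm{d}\rangle$. Applying Cauchy--Schwarz to bound the second term from below by $-\|\bm{g} - \bm{t}\|_2\,\|\bm{d}\|_2$ shows that $\langle \bm{g}, \bm{d}\rangle \geq \langle \bm{t}, \bm{d}\rangle - \|\bm{g} - \bm{t}\|_2\,\|\bm{d}\|_2$, which is strictly positive precisely when $\|\bm{g} - \bm{t}\|_2 < \langle \bm{t}, \bm{d}\rangle / \|\bm{d}\|_2$. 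This is exactly the hypothesis in Equation \ref{eq:5}, so this step closes the argument modulo one remaining fact.

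The fact I expect to be the crux is that the right-hand side of Equation \ref{eq:5} is genuinely positive, i.e.\ $\langle \bm{t}, \bm{d}\rangle > 0$, for otherwise the constraint would be vacuous. Using the first hypothesis (equal cosine similarity $c$ with the target), the inner products expand as $\langle \bm{t}, \nabla_\theta\gL_{p,\bullet}\rangle = c\,\|\bm{t}\|_2\,\|\nabla_\theta\gL_{p,\bullet}\|_2$, whence $\langle \bm{t}, \bm{d}\rangle = c\,\|\bm{t}\|_2\,(\|\nabla_\theta\gL_{p,\mathrm{our}}\|_2 - \|\nabla_\theta\gL_{p,\mathrm{cos}}\|_2)$. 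Thus I must show $\|\nabla_\theta\gL_{p,\mathrm{our}}\|_2 > \|\nabla_\theta\gL_{p,\mathrm{cos}}\|_2$. Viewing the squared distance $\|\bm{t} - \bm{p}\|_2^2 = \|\bm{t}\|_2^2 + r^2 - 2c\,\|\bm{t}\|_2\,r$ as a function of the poison magnitude $r = \|\bm{p}\|_2$ at fixed cosine $c$, this is a convex parabola minimized at $r^{\ast} = c\,\|\bm{t}\|_2$; the third hypothesis places $\|\nabla_\theta\gL_{p,\mathrm{cos}}\|_2$ strictly on the decreasing branch $r < r^{\ast}$, so the second hypothesis (smaller distance for the $\mathrm{our}$ gradient) forces $\|\nabla_\theta\gL_{p,\mathrm{our}}\|_2$ into the interval $(\|\nabla_\theta\gL_{p,\mathrm{cos}}\|_2,\, 2r^{\ast} - \|\nabla_\theta\gL_{p,\mathrm{cos}}\|_2)$ and in particular strictly above $\|\nabla_\theta\gL_{p,\mathrm{cos}}\|_2$. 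This gives $\langle \bm{t}, \bm{d}\rangle > 0$ and completes the plan. The only non-rigorous element is the first-order expansion, which I would justify either by restricting to sufficiently small $\eta$ or by assuming Lipschitz-continuous gradients so that the second-order remainder is dominated and the sign of the difference is preserved.
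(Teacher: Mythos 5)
Your proposal is correct and follows essentially the same route as the paper's own proof: a first-order Taylor expansion reducing the claim to $\langle \bm{g}(\bm{x}), \bm{g}_{\mathrm{our}} - \bm{g}_{\mathrm{cos}}\rangle > 0$, the decomposition $\bm{g}(\bm{x}) = \nabla_\theta\gL_t + \bm{\delta}$, Cauchy--Schwarz, and the expansion $\langle \nabla_\theta\gL_t, \bm{g}_{\mathrm{our}} - \bm{g}_{\mathrm{cos}}\rangle = c\,\|\nabla_\theta\gL_t\|_2\,(\|\bm{g}_{\mathrm{our}}\|_2 - \|\bm{g}_{\mathrm{cos}}\|_2)$. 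One point in your favor: where the paper merely asserts $\|\bm{g}_{\mathrm{our}}\|_2 > \|\bm{g}_{\mathrm{cos}}\|_2$ ``due to the first three conditions,'' your parabola argument --- viewing $\|\nabla_\theta\gL_t - \bm{p}\|_2^2$ at fixed cosine $c$ as a convex function of $r = \|\bm{p}\|_2$ minimized at $r^\ast = c\,\|\nabla_\theta\gL_t\|_2$, with the third hypothesis placing $\|\bm{g}_{\mathrm{cos}}\|_2$ on the decreasing branch --- is the actual justification of that step, so your write-up closes a gap the paper leaves implicit; both treatments share the same informality about the Taylor remainder, which you at least flag explicitly.
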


\begin{proof}
To simplify notations, we define:
\begin{align*}
&\bm{g}_{\mathrm{our}} = \nabla_\theta \mathcal{L}_{\mathrm{p, our}}, \quad \bm{g}_{\mathrm{cos}} = \nabla_\theta \mathcal{L}_{p, \mathrm{cos}}, \\
&\bm{g}_t = \nabla_\theta \mathcal{L}_t, \quad \bm{g}(\bm{x}) = \nabla_\theta\ell(f_\theta(\bm{x}), y_{\mathrm{adv}}).
\end{align*}
First, we show that Equation~\ref{eq:5} holds:
\begin{equation}
\begin{aligned}
\langle \bm{g}_t, \bm{g}_{\mathrm{our}} - \bm{g}_{\mathrm{cos}}\rangle &= \langle \bm{g}_t, \bm{g}_{\mathrm{our}}\rangle - \langle \bm{g}_t, \bm{g}_{\mathrm{cos}}\rangle \\
&= c \cdot \|\bm{g}_t\|_2 \cdot (\|\bm{g}_{\mathrm{our}}\|_2 - \|\bm{g}_{\mathrm{cos}}\|_2) \\
&> 0.
\end{aligned}
\end{equation}
We have $\|\bm{g}_{\mathrm{our}}\|_2 > \|\bm{g}_{\mathrm{cos}}\|_2$ due to the first three conditions in the proposition.
For any $\bm{x}$ satisfying Equation~\ref{eq:5}, expanding $\ell(f_{\theta'}(\bm{x}), y_{\mathrm{adv}})$ at $\theta'$ using Taylor's theorem gives:
\begin{equation}
\ell(f_{\theta'}(\bm{x}), y_{\mathrm{adv}}) - \ell(f_{\theta}(\bm{x}), y_{\mathrm{adv}}) = -\eta\langle \bm{g}(\bm{x}), \bm{g}\rangle + C,
\end{equation}
where $\bm{g}$ is  either $\bm{g}_{\mathrm{our}}$ or $\bm{g}_{\mathrm{cos}}$, and $C$ denotes the remainder term. To ensure $\ell(f_{\theta'}(\bm{x}), y_{\mathrm{adv}})$ decreases, $\langle \bm{g}(\bm{x}), \bm{g} \rangle$ must increase.
Decompose $\bm{g}(\bm{x})$ as $\bm{g}_t + \bm\delta$, where 
$$
\|\bm{\delta}\|_2 < \frac{\langle \bm{g}_t, \bm{g}_{\mathrm{our}} - \bm{g}_{\mathrm{cos}} \rangle}{\|\bm{g}_{\mathrm{our}} - \bm{g}_{\mathrm{cos}}\|_2}.
$$
Then, we have
\begin{equation}
\langle \bm{g}(\bm{x}), \bm{g} \rangle = \langle \bm{g}_t + \bm{\delta}, \bm{g} \rangle = \langle \bm{g}_t, \bm{g} \rangle + \langle \bm{\delta}, \bm{g} \rangle.
\end{equation}
To prove Equation~\ref{eq:6}, it suffices to show $\langle \bm{g}(\bm{x}), \bm{g}_{\mathrm{our}} \rangle > \langle \bm{g}(\bm{x}), \bm{g}_{\mathrm{cos}} \rangle$. Rearranging this inequality yields:
\begin{equation}
\label{eq:10}
\langle \bm{\delta}, \bm{g}_{\mathrm{our}} - \bm{g}_{\mathrm{cos}}\rangle > -\langle \bm{g}_t, \bm{g}_{\mathrm{our}} - \bm{g}_{\mathrm{cos}}\rangle.
\end{equation}
Applying the Cauchy-Schwarz inequality to $\langle \bm{\delta}, \bm{g}_{\mathrm{our}} - \bm{g}_{\mathrm{cos}}\rangle$ gives:
\begin{equation}
\label{eq:11}
\begin{aligned}
\langle \bm{\delta}, \bm{g}_{\mathrm{our}} - \bm{g}_{\mathrm{cos}}\rangle &\geq -\|\bm{\delta}\|_2 \cdot \|\bm{g}_{\mathrm{our}} - \bm{g}_{\mathrm{cos}}\|_2 \\
& > -\langle \bm{g}_t, \bm{g}_{\mathrm{our}} - \bm{g}_{\mathrm{cos}}\rangle,
\end{aligned}
\end{equation}
which confirms the lower bound in Equation~\ref{eq:10}.
\end{proof}

\begin{corollary}
When $\bm{g}_\mathrm{our}$ has a same direction to $\bm{g}_\mathrm{cos}$, the $L_2$ ball can be simplified to 
\begin{equation}
\|\bm{\delta}\|_2 < c \cdot \|\bm{g}_t\|_2.
\end{equation}
Since the $ \|\bm{g}_t\|_2$ is large (the target samples with incorrect labels) and $c$ is close to $1$, the impact of a smaller Euclidean distance can generalize to a very large area.
\end{corollary}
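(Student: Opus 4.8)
The plan is to specialize the $L_2$-ball bound from Equation~\ref{eq:5} under the added collinearity hypothesis, so the argument is essentially a direct substitution. First I would invoke the assumption that $\bm{g}_{\mathrm{our}}$ and $\bm{g}_{\mathrm{cos}}$ share the same direction to write both as positive multiples of a common unit vector $\bm{u}$, say $\bm{g}_{\mathrm{our}} = a\bm{u}$ and $\bm{g}_{\mathrm{cos}} = b\bm{u}$ with $a,b>0$. The proof of Proposition~1 already established $\|\bm{g}_{\mathrm{our}}\|_2 > \|\bm{g}_{\mathrm{cos}}\|_2$ (from the first three conditions there), so the scalars obey the strict ordering $a > b > 0$.

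Next I would rewrite the difference appearing in the denominator and numerator of the right-hand side of Equation~\ref{eq:5}. Collinearity gives $\bm{g}_{\mathrm{our}} - \bm{g}_{\mathrm{cos}} = (a-b)\bm{u}$, whose Euclidean norm is simply $a-b$, strictly positive by the ordering above. The numerator then factors as $\langle \bm{g}_t, \bm{g}_{\mathrm{our}} - \bm{g}_{\mathrm{cos}}\rangle = (a-b)\,\langle \bm{g}_t, \bm{u}\rangle$.

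The one substantive step is evaluating $\langle \bm{g}_t, \bm{u}\rangle$ through the cosine-matching condition. Since $\bm{g}_{\mathrm{cos}} = b\bm{u}$ with $b>0$, the unit vector $\bm{u}$ makes the same angle with $\bm{g}_t$ as $\bm{g}_{\mathrm{cos}}$ does, and that cosine equals $c$ by the first hypothesis of Proposition~1. Hence $\langle \bm{g}_t, \bm{u}\rangle = c\,\|\bm{g}_t\|_2$. Substituting this into the factored numerator and canceling the common factor $(a-b)$ against the denominator collapses the bound to
\begin{equation*}
\frac{\langle \bm{g}_t, \bm{g}_{\mathrm{our}} - \bm{g}_{\mathrm{cos}}\rangle}{\|\bm{g}_{\mathrm{our}} - \bm{g}_{\mathrm{cos}}\|_2} = \frac{(a-b)\,c\,\|\bm{g}_t\|_2}{a-b} = c\,\|\bm{g}_t\|_2,
\end{equation*}
so the admissible region in Equation~\ref{eq:5} becomes exactly $\|\bm{\delta}\|_2 < c\,\|\bm{g}_t\|_2$, as claimed.

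I do not anticipate a genuine obstacle, since everything reduces to substitution once collinearity lets the $(a-b)$ factor cancel. The only point requiring care is ensuring this cancellation preserves the correct (positive) sign, which holds precisely because the magnitude ordering $a>b$ inherited from Proposition~1 guarantees a strictly positive common factor; otherwise the quotient could flip sign and the ball would be empty. The closing interpretive remark—that a large $\|\bm{g}_t\|_2$ together with $c$ near $1$ makes this ball large—is then immediate from the form of the bound and requires no further argument.
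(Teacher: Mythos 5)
Your proposal is correct and follows exactly the route the paper intends: the paper states the corollary without proof as an immediate specialization of the bound in Equation~\ref{eq:5}, and your substitution argument (writing $\bm{g}_{\mathrm{our}} = a\bm{u}$, $\bm{g}_{\mathrm{cos}} = b\bm{u}$ with $a > b > 0$ from the proposition's established magnitude ordering, evaluating $\langle \bm{g}_t, \bm{u}\rangle = c\,\|\bm{g}_t\|_2$ via the shared cosine condition, and canceling the positive factor $a-b$) is precisely the computation that justifies it. Your attention to the sign of the canceled factor is the right point of care, and nothing further is needed.
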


\section{Additional Experiments}

\subsection{Data Augmentation}
\label{app:augment}

\begin{figure}[t]
    \centering   
    \includegraphics[width=0.8\linewidth]{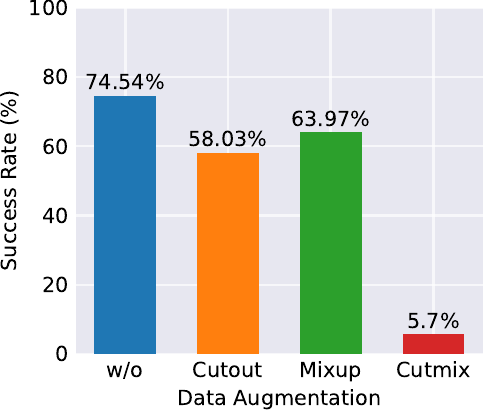}
    \caption{Poisoning success rates of standard gradient matching with/without applying data augmentation on target samples.}
    \label{fig:negative aug}
\end{figure}

To investigate whether data augmentation can simulate physical variations, we take advantage of the original method proposed in \cite{geiping2021witches}, and perform data augmentation on target samples.
According to Figure~\ref{fig:negative aug}, we find that the SR decreases.
We believe this result reveals the gap between the physical domain and the digital domain.
The poison samples are constructed with the variations in the digital domain, where these variations may not exist in the physical domain.
In other word, performing data augmentation on target samples makes a waste of the impact of poisoning on irrelevant variations, therefore leads to the decline of SR.

\subsection{Poisoning Transferability across Architectures}
\label{app:transfer}
Table~\ref{tab:blackbox_eval} reports the transfer poisoning success rate with different surrogate models.
The adversarial loss we adopted here is $D_{\mathrm{mul}}$.
As can be seen, the ResNet-18, which is commonly adopted \citep{geiping2021witches, schwarzschild2021just}, surprisingly achieves the worst transferability in our threat model.
In contrast, VGG11, which does not contain residual modules, shows the best results.

\begin{table}[t]
\centering
\caption{Transferability of poisoning SR (\%) on Multi-View Car. Each row represents a surrogate model for creating data poisons, while each column stands for a victim model.}
\label{tab:blackbox_eval}
\resizebox{\linewidth}{!}{
\begin{tabular}{l|cccc|c}
\toprule
\textbf{Model} & RN-18 & MobileNet & VGG11 & LeViT & Avg. \\
\midrule
RN-18      & $90.13$ & $40.78$ & $17.18$ & $10.10$ & $39.55$ \\
MobileNet & $60.68$ & $81.87$ & $30.60$ & $15.70$ & $47.21$ \\ 
VGG11     & $80.17$ & $74.42$ & $84.44$ & $61.09$ & $75.03$ \\
LeViT     & $32.89$ & $14.03$ & $57.32$ & $81.97$ & $46.55$ \\
\bottomrule
\end{tabular}
}
\end{table}

\begin{figure*}[tbp]
    \centering
    \includegraphics[width=0.8\textwidth]{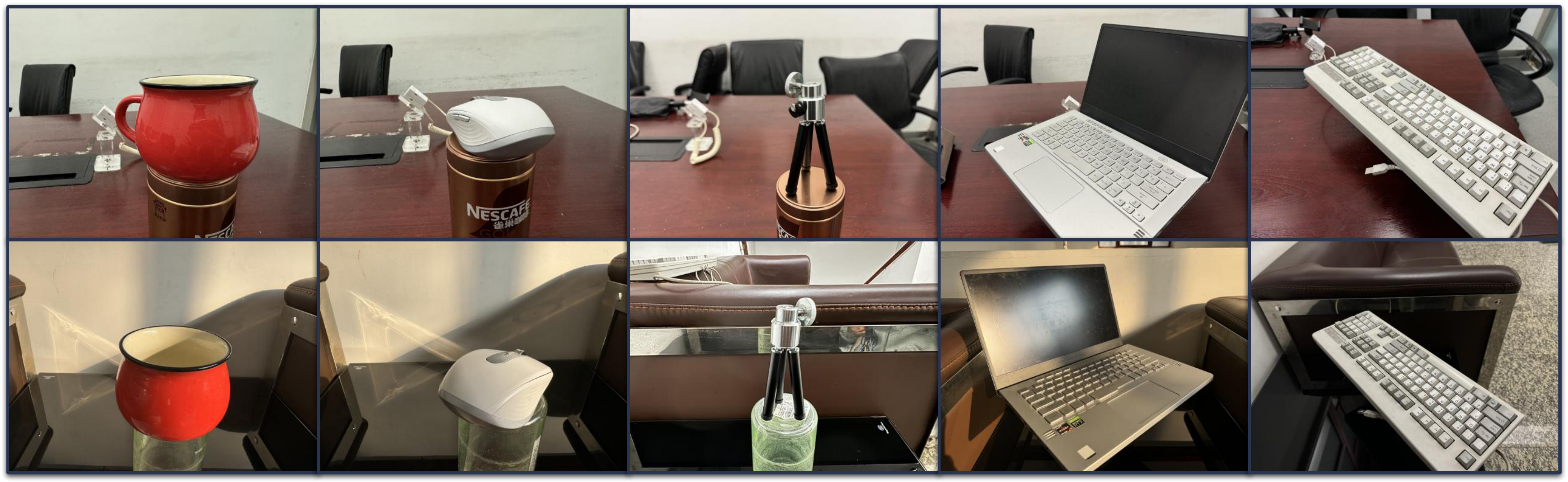}
    \caption{Visualization of target physical objects in our handmade dataset.}
    \label{fig:visualization appendix}
\end{figure*}

\begin{figure*}[tbp]
    \centering
    \adjustbox{valign=c}{
    \subfloat[Poison Samples with $\epsilon=16$ in CIFAR-10 dataset.]{
        \includegraphics[width=0.25\linewidth]{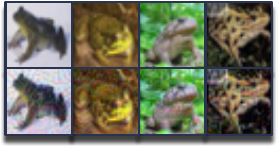}
    }
    }
    \adjustbox{valign=c}{
    \subfloat[Poison Samples with $\epsilon=8$ in ImageNet dataset.]{
        \includegraphics[width=0.7\linewidth]{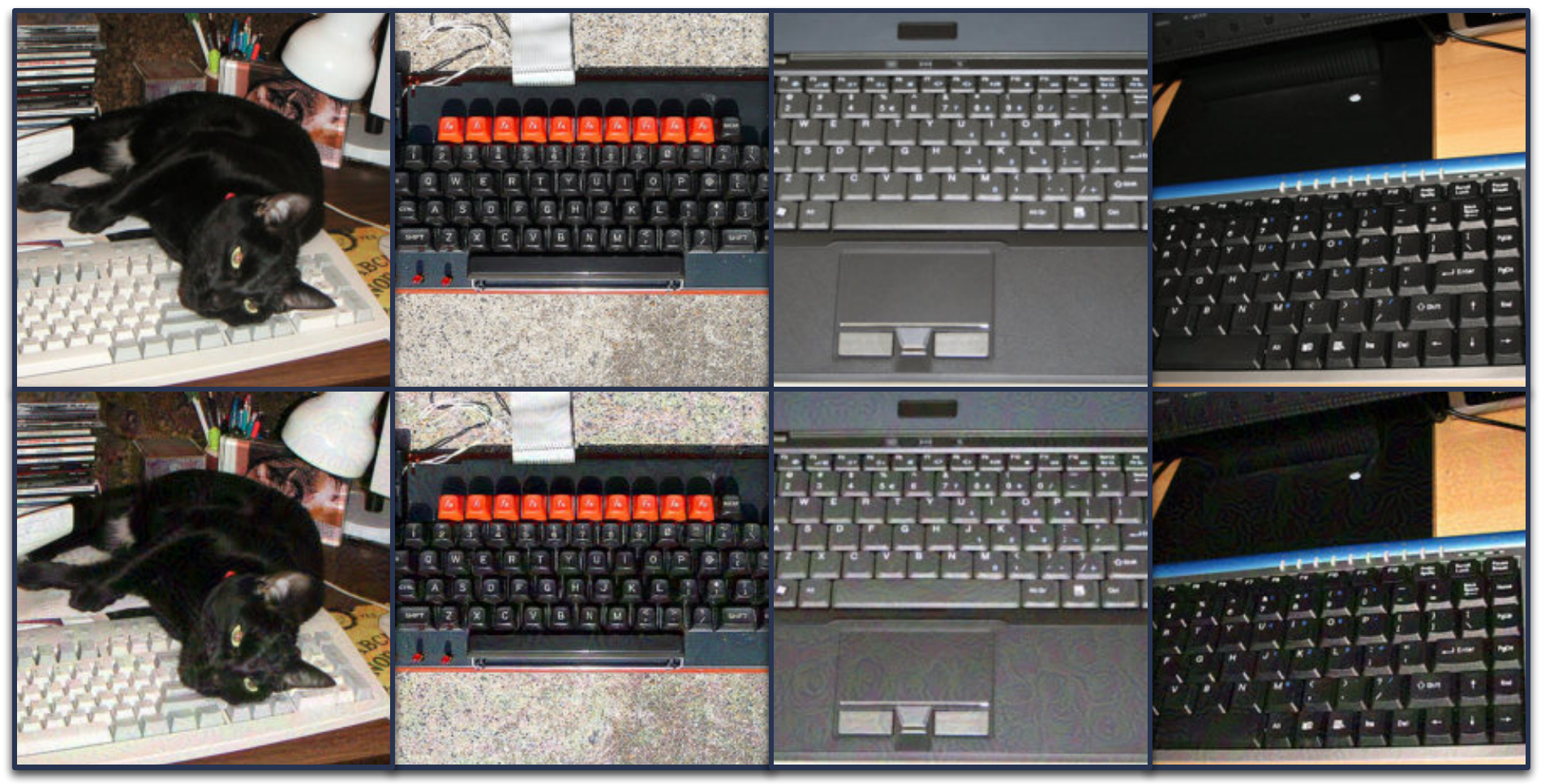}
    }
    }
    \caption{Visualization of poison samples.}
    \label{fig:poison}
\end{figure*}

\section{Additional Evaluation Details}
\label{app:experiment details}
\subsection{Main Experiments on Multi-View Car}

We consider $5$ neural network architectures: ConvNet, ResNet-18, VGG11, MobileNet V2, and LeViT-384.
The ConvNet contains $5$ convolutional layers followed by a linear layer.
A ReLU activation follows each convolutional layer, and for the last two convolutional layers, each layer is followed by a max pooling layer with size $3$.
The output widths of each layer are $64$, $128$, $128$, $256$, $256$, $2304$.
The ResNet-18 setting is followed by Witches' Brew \citep{geiping2021witches}, where they modify the first convolutional layer with a kernel size of $3$.
VGG-11, MobileNet V2, and LeViT-384 are with their standard configurations.

We set the learning rate to $0.1$ for ResNet-18 and $0.01$ for the others.
The batch size is $512$ for LeViT-384 and $128$ for the others.
We train ConvNet, ResNet-18, VGG-11, and MobileNet V2 for $40$ epochs and schedule the learning rate drops at $14$, $24$, and $35$ by a factor of $0.1$.
Since the ViTs need more epochs to converge, the total number of training epochs of LeViT-384 is set to $100$.
We adjust the learning rate, dropping at $37$, $62$, and $87$ by a factor of $0.1$.
We apply the differential data augmentation that horizontally flips the image with a probability of $0.5$ and randomly crops a size $32 \times 32$ with a zero-padding of $4$.

The Swin Transformer we discussed in Section~\ref{sec:exp results} is a modified Swin V2 Tiny~\cite{liu2022swin} model that reduces the patch size from $4$ to $2$ and the window size from $8$ to $4$.
We train this model with a learning rate of $0.01$ for $100$ epochs and drop the learning rate at $50$, $75$, and $90$ by a factor of $0.1$.

\subsection{More Details about ImageNet Experiments}

For ImageNet, we only use $5$ different cars as target objects. 
This is because ImageNet contains $1,000$ categories of images, and there are some overlaps (such as convertible and sports car).
The different views of some vehicles can even be classified into distinct categories.
For simplicity, we select $5$ cars, which will all be classified into the ``sports car''  category regardless of their viewpoints. 

For the single trial in the complete ImageNet, the hyperparameter is the same as in the ImageNet subset, but differs in $R=1$.
The model architecture is ResNet-18.
To evaluate the poisoning more realistically, the poisoner trains the surrogate model with $60$ epochs, but the victim trains the model with $120$ epochs.
The success of data poisoning is usually achieved as long as the poisoned samples are constructed on a converged model; different training settings will not affect their effectiveness.

\section{Visualizations}
\label{app:visualization}
\subsection{Target Variations in Handmade Dataset}

There are $5$ classes of targets we have discussed in the corresponding section in the paper: cup, computer mouse, tripod, laptop, and keyboard.
Figure~\ref{fig:visualization appendix} shows our chosen real-world objects.

\subsection{Generated Poison Samples}

We visualize a couple of images of the poison samples generated by our method in Figure~\ref{fig:poison}.

\end{document}